\renewcommand\footnotetextcopyrightpermission[1]{} % removes footnote with conference information in first column
\newcommand{\defeq}{\mathrel{\mathop:}=}
\tikzstyle{int}=[draw, fill=blue!20, minimum size=2em]
\tikzstyle{init} = [pin edge={to-,thin,black}]
\newcommand{\icol}[1]{\left[ \begin{smallmatrix}#1\end{smallmatrix}\right ]}
\keywords{neural networks, neural architecture, deep learning, equivariance, invariance, games, checkers}
\date{\today}
\title{Finite Group Equivariant Neural Networks for Games}
\author{Oisín Carroll}
\affiliation{%
  \institution{Trinity College Dublin}
  \city{Dublin}
  \country{Ireland}
}
\email{carroloi@tcd.ie}
\author{Joeran Beel}
\affiliation{%
  \institution{University of Siegen}
  \city{Siegen}
  \country{Germany}
}
\email{joeran.beel@uni-siegen.de}
\begin{document}

\begin{abstract}
Games such as go, chess and checkers have multiple equivalent game states, i.e. multiple board positions where symmetrical and opposite moves should be made. These equivalences are not exploited by current state of the art neural agents which instead must relearn similar information, thereby wasting computing time. Group equivariant CNNs in existing work create networks which can exploit symmetries to improve learning, however, they lack the expressiveness to correctly reflect the move embeddings necessary for games. We introduce Finite Group Neural Networks (FGNNs), a method for creating agents with an innate understanding of these board positions. FGNNs are shown to improve the performance of networks playing checkers (draughts), and can be easily adapted to other games and learning problems. Additionally, FGNNs can be created from existing network architectures. These include, for the first time, those with skip connections and arbitrary layer types. We demonstrate that an equivariant version of U-Net (FGNN-U-Net) outperforms the unmodified network in image segmentation.

\end{abstract}

\maketitle

\section{Introduction}
\label{sec:org79f4b73}
The leading computer algorithms for playing many board games are deep-learning based. Google's DeepMind created the first Go program able to beat a world champion; AlphaGo \cite{Silver2016} and their subsequent papers discuss AlphaZero; a generic algorithm that was trained to become a top go, shogi or chess engine \cite{DBLP:journals/corr/abs-1712-01815}. LeelaChessZero --- a community lead effort to replicate AlphaZero for chess (based on LeelaZero, a go program) recently won the computer chess world championship\footnote{https://www.chess.com/news/view/lc0-wins-computer-chess-championship-makes-history}. The primary innovations in these papers are the methods used to train the networks stably through self-play; loss functions, genomes and randomness, as well as a much improved weighted Monte Carlo tree search.

\begin{figure}[htbp]
\centering
\includegraphics[width=0.38\textwidth]{./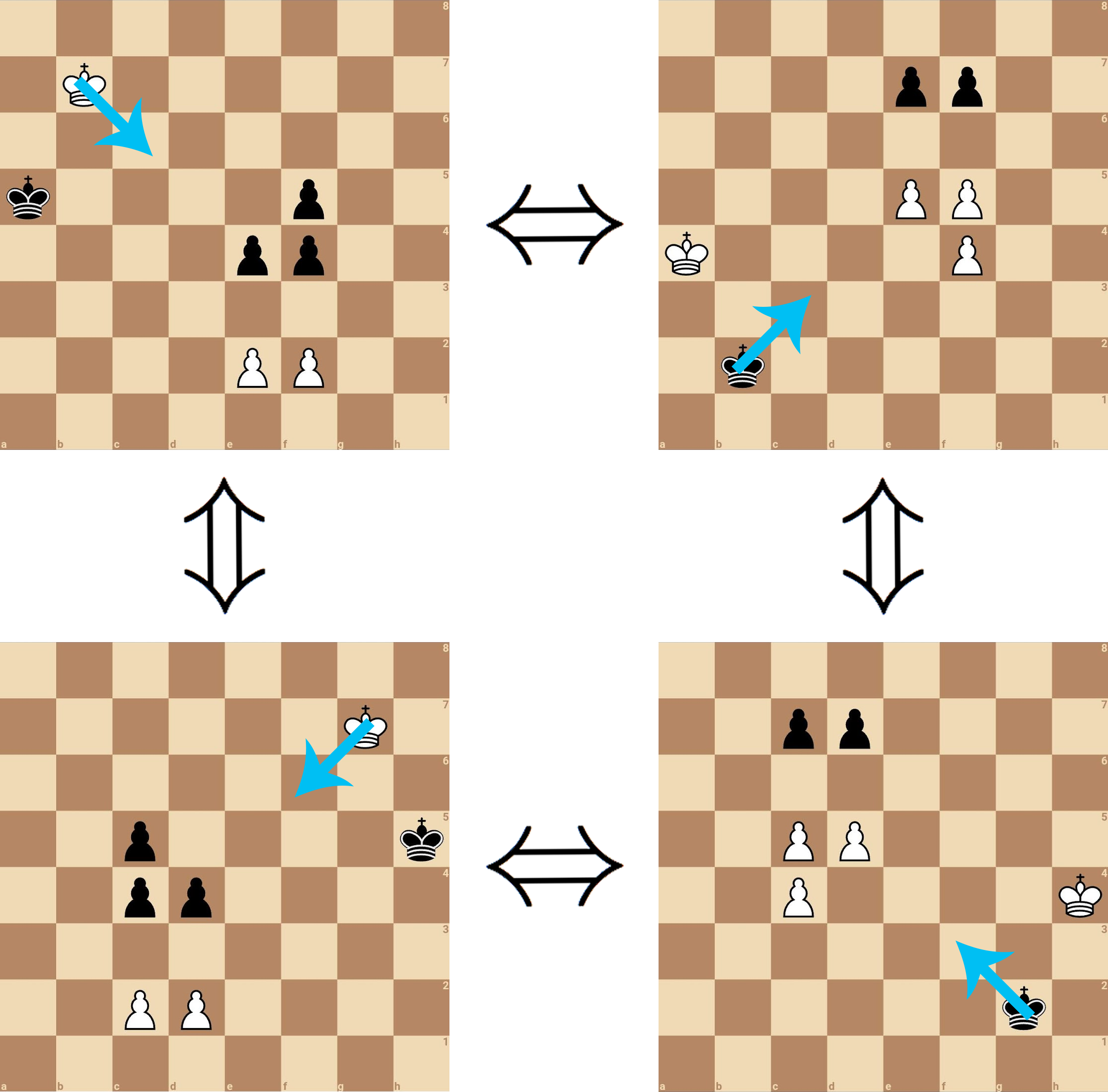}
\caption{\label{fig:orge156d2e}Chess positions with white to play upwards (left) and black to play downwards (right). These are all equivalent through reflecting left-to-right and/or swapping colours, and the only drawing move is shown in blue on all boards.}
\end{figure}

The neural network architectures used at the core of these algorithms are similar however; variations on standard Convolutional Neural Networks (CNNs). They are not able to understand symmetries of the game rules or board and because of this, relearn similar information multiple times. For example,
solving for the best move in any of the 4 boards shown in Figure \ref{fig:orge156d2e} is an equivalent problem; knowing the move in any one means you should play the symmetrically-equivalent move in the rest. More formally,
for some network predicting a move from a board-state \(N: X \rightarrow Y\) and an operation \(g\) which reflects the board and \(g'\) which reflects a move, \(N(gx) = g'N(x)\) for all \(x \in X\). This property is \emph{equivariance}; some transformation of the input leads to an equivalent transformation of the output. In other cases where \(g'\) is the identity function it is \emph{invariance}; where the output is unaffected by some transformation of the input.

If a network was equivariant over these equivalent board positions, the training time theoretically could be reduced by 50-75\%, or the network could potentially achieve higher performance in the same training time. If this equivariance was achieved through weight-sharing, model sizes may also be reduced by a similar amount, along with over-fitting of those networks.

In non-gaming scenarios like image classification and segmentation, research into neural networks which are invariant or equivariant (respectively) to some property of their input is a rapidly expanding field, with many influential works being published in recent years. These include invariances over sets \cite{qi2017pointnet,zaheer2017deep,vinyals2015order,bloem2019probabilistic}; where the order of the input sequence is ignored, graphs \cite{DBLP:journals/corr/KipfW16,DBLP:journals/corr/GilmerSRVD17,maron2018invariant,bloem2019probabilistic}; where perturbations of the input nodes and edges are ignored, and spatial equivariance to a variety of group operations.
These are generally CNN-based and can be divided into networks which are equivariant over finite groups such as rotations of 90\textdegree{}
\cite{pmlr-v48-cohenc16,dieleman2016exploiting,winkels20183d,worrall2018cubenet,romero2020attentive}
and methods which are approximately equivariant over continuous groups, such as affine transformations \cite{gens2014deep,weiler2018learning,bekkers2018roto,lenssen2018group,DBLP:journals/corr/abs-1801-10130,cohen2019general,smets2020pde}.

Games however require a different kind of equivariance. Reflecting the input board state should result in the network predicting the symmetrically opposite move which, in many cases, is not the same as reflecting the output tensor. \emph{AlphaZero} \cite{DBLP:journals/corr/abs-1712-01815} for chess encodes moves in the output where each layer corresponds to a specific direction (North/North-East/East/\ldots{}) and distance to move the piece. Reflecting this move tensor won't correctly reverse the move direction.
Existing research into group-equivariant CNNs cannot easily be extended to handle this case.

Additionally, current research is not shown to work for networks with skip connections, which are an indispensable component of many popular neural network architectures \cite{he2015residual,he2016identity,srivastava2015training,huang2017densely,DBLP:journals/corr/RonnebergerFB15}, and CNN based approaches in existing work also do not extend to networks with fully connected layers, as are used in AlphaGo and AlphaZero.

We introduce Finite Group Neural Networks (FGNNs), which are equivariant over arbitrary finite groups. FGNNs can be used to derive equivariant versions of existing neural network architectures including, for the first time, those with skip connections and arbitrary layer types. Importantly, FGNNs can represent the reflection of move embeddings necessary for games.
% FGNNs can be viewed as a generalization of the work of Dieleman et Al. \cite{dieleman2016exploiting} to arbitrary groups and with an additional proof for skip connections, although the approach differs. %JB dieleman2016exploiting should be mentioned earlier 
We demonstrate that FGNNs reduce over-fitting and improve performance when compared to equivalent networks without this equivariance at playing checkers (draughts). This performance increase is present across a wide variety of model sizes, and the methods presented are general enough to be used in a variety of board games including chess, go and shogi. We conclude with FGNN-based implementations of a popular architecture utilizing skip connections; U-Net \cite{DBLP:journals/corr/RonnebergerFB15}, which are equivariant over several finite groups. We show these outperform the unmodified network, and demonstrate equivalent performance with 4-8 times fewer weights in the task of image segmentation.

\section{Related Work}
Many commonly used CNN architectures for image classification and segmentation, such as ResNet \cite{he2015residual} and DenseNet \cite{huang2017densely}, are approximately invariant or equivariant to small translations of the input \cite{zhang2019making}. This is a form of symmetry, and it is natural then to consider extensions of this symmetry to rotations and reflections of the input.

G-CNNs provide equivariance over arbitrary finite transformation groups by modifying the convolution operation \cite{pmlr-v48-cohenc16}, called a G-Convolution, and provide the basis for many of the recent works in this area.
G-CNNs are used to solve 3D problems over voxel data \cite{winkels20183d,worrall2018cubenet}, and are extended to support attention \cite{romero2020attentive}.
G-Convolutions are utilized to create equivariance over the SE(2) group; consisting of all rotations and translations \cite{weiler2018learning,bekkers2018roto,smets2020pde},
general Lie groups using capsule networks \cite{lenssen2018group}.
Other works utilize G-Convolutions to create translation invariance to input in different homogeneous spaces, such as across the surface of a sphere
\cite{DBLP:journals/corr/abs-1801-10130,cohen2019general}.

Dieleman et al. \cite{dieleman2016exploiting} create a CNN equivariant to cyclic symmetries --- rotations of 90\textdegree{}, by
applying each layer of a network multiple times to different transformations of the input. Their approach doesn't place restrictions on the layer types of the network, however when applied to networks with only convolutional layers, they create equivalent models to G-CNNs. They briefly comment on issues with extending the approach to arbitrary groups.

\section{Finite Group Neural Networks}
\label{sec:org5ee5c9d}
We introduce \emph{Finite Group Neural Networks} or FGNNs. The core idea of FGNNs is that instead of each layer in the resulting network being equivariant to some target group \(G\), it is instead equivariant to some embedding of that group. We define an operation \(T_g\) for each \(g \in G\) and enforce each layer's equivariance to this instead. This equivariance can be created by applying each layer several times to specific transformations of the input, and concatenating the result. This method can be used to create equivariant FGNNs from existing network architectures regardless of their component layers.

Our method can be viewed as an extension of the work of Dieleman et al. \cite{dieleman2016exploiting} to arbitrary groups. Our $\mathit{Lift}$ and $\mathit{Drop}$ layers are analogous to their $\mathit{Slice}$ and $\mathit{Pool}$ layers when our method is applied to the same cyclic symmetry group. Our method and proofs apply to arbitrary finite groups however, and we add an additional layer type for merging, allowing our networks to contain skip connections. Additionally, the method we use for reasoning about the network's equivariance and invariance is easily extensible to, among other uses, express the reflection of move embeddings necessary for games.

\subsection{Equivariance to Horizontal Flips}

In order to aid description, this section gives a practical example of an FGNN derived from a simple network which is equivariance over horizontal reflections. The formalization of this which applies to arbitrary layers, network architectures and groups follows.

A neural network without recurrent layers or memory is a pure function \(N(X) = Y\). It can be written as the composition of $k$ separate layers \(f_i, i \in [1..k]\), which are linearly composed for now.
\begin{center}
\begin{tikzpicture}[node distance=2.5cm,auto,>=latex']
    \node [int] (a) {$X_0$};
    \node [int] (b) [right of=a] {$X_1$};
    \node [int] (c) [right of=b] {$Y$};
    \path[->] (a) edge node {$f_0$} (b);
    \path[->] (b) edge node {$f_1$} (c);
\end{tikzpicture}
\end{center}

We add the restriction that the input tensor \(X\) must have an even number of 'layers', and write it as two stacked subtensors, or 'slices' of equal size.
$$
X = \icol{ X^{(1)} \\ X^{(2)} }
$$
In order to modify this network, we replace all functions \(f_i\) with their modified version \(f_i'\). \(g\) is a matrix which horizontally reflects the input. Note the reordering of the subtensors in the second call of \(f_i\).
$$
f_i'\left(\icol{ X^{(1)} \\ X^{(2)} }\right) = \begin{bmatrix}
            f_i\left(\icol{ X^{(1)} \\ X^{(2)} }\right) \\
            gf_i\left(g\icol{ X^{(2)} \\ X^{(1)} }\right)
        \end{bmatrix}
$$
Then the resulting equivariant network \(N'\) can be written as:
\begin{center}
\begin{tikzpicture}[node distance=1.9cm,auto,>=latex']
    \node [int] (in) {$X_0$};
    \node [int] (a) [right of=in, node distance=1.6cm]{$\icol{ X_0 \\ X_0 }$};
    \node [int] (b) [right of=a] {$\icol{ X_1^{(0)} \\ X_1^{(1)} }$};
    \node [int] (c) [right of=b] {$\icol{ Y^{(0)} \\ Y^{(1)} }$};
    \node [int] (out) [right of=c] {$\substack{Y^{(0)}\\ + Y^{(1)}}$};

    \path[->] (in) edge node {$\mathit{Lift}$} (a);
    \path[->] (a) edge node {$f_0'$} (b);
    \path[->] (b) edge node {$f_1'$} (c);
    \path[->] (c) edge node {$\mathit{Drop}$} (out);
\end{tikzpicture}
\end{center}

Along with switching out the layers, we add two more here. The \(\mathit{Lift}\) layer at the start simply duplicates the input. The \(\mathit{Drop}\) layer at the end adds the two slices of $X$.
\begin{align}
\mathit{Lift}(X) &= \icol{X\\X} \\
\mathit{Drop}\left(\icol{X^{(0)}\\X^{(1)}}\right) &= X^{(0)}+X^{(1)}
\end{align}
Note that the dimensionality of the component layers may need to be changed, i.e. for convolutional layers the number of features should be halved in all layers except the last to retain the same model size and maintain layer inter-connectivity.

In order to be reflection equivariant, the following property should hold for all inputs:
$$
N'(X) = Y \implies N'(gX) = gY = gN'(X)
$$
This can be verified by defining an operation \(T : X \rightarrow X\) which both applies \(g\) and reverses the order of the component tensors. Since \(g\) reflects the tensors, it can be applied equally to each sub-tensor \(X^{(i)}\).
$$
T\icol{ X^{(1)} \\ X^{(2)} }
= g\icol { X^{(2)} \\ X^{(1)} }
= \icol { gX^{(2)} \\ gX^{(1)} }
$$
We can then show that applying \(g\) the input leads to the following network values after each layer is applied, with the result correctly showing equivariance.

\begin{center}
\begin{tikzpicture}[node distance=1.8cm,auto,>=latex']
    \node [int] (in) {$gX_0$};
    \node [int] (a) [right of=in, node distance=1.6cm]{$T\icol{ X_0 \\ X_0 }$};
    \node [int] (b) [right of=a] {$T\icol{ X_1^{(0)} \\ X_1^{(1)} }$};
    \node [int] (c) [right of=b] {$T\icol{ Y^{(0)} \\ Y^{(1)} }$};
    \node [int] (out) [right of=c, node distance=2.1cm] {$g \left(\substack{Y^{(0)}\\ + Y^{(1)}}\right)$};

    \path[->] (in) edge node {$\mathit{Lift}$} (a);
    \path[->] (a) edge node {$f_0'$} (b);
    \path[->] (b) edge node {$f_1'$} (c);
    \path[->] (c) edge node {$\mathit{Drop}$} (out);
\end{tikzpicture}
\end{center}

The necessary properties (below) of \(\mathit{Lift}\) and \(\mathit{Drop}\) layers can both be verified from the definitions. Reordering identical slices and reordering tensors before adding them, respectively, have no effect.
\begin{align}
\mathit{Lift}(gX) &= T\mathit{Lift}(X)\label{lft-def}\\
\mathit{Drop}(TX) &= g\mathit{Drop}(X)\label{drp-def}
\end{align}
Demonstrating the commutativity of \(T\) over \(f_i'\) is slightly more challenging.
\begin{align}
f_i'\left(T\icol{ X^{(1)} \\ X^{(2)} }\right)
= \begin{bmatrix}
    f_i\left(T\icol{ X^{(1)} \\ X^{(2)} }\right) \\
    gf_i\left(gT\icol{ X^{(2)} \\ X^{(1)} }\right) \\
\end{bmatrix}
= \begin{bmatrix}
    f_i\left(g\icol{ X^{(2)} \\ X^{(1)} }\right) \\
    gf_i\left(g^2 \icol{ X^{(1)} \\ X^{(2)} }\right) \\
\end{bmatrix}
\end{align}
Then using the fact that \(g\) is its own inverse, i.e. reflecting something twice has no effect.
\begin{align}
&= \begin{bmatrix}
    f_i\left(g\icol{ X^{(2)} \\ X^{(1)} }\right) \\
    gf_i\left(\icol{ X^{(1)} \\ X^{(2)} }\right) \\
\end{bmatrix}
= Tg^{-1}\begin{bmatrix}
    gf_i\left(\icol{ X^{(1)} \\ X^{(2)} }\right) \\
    f_i\left(g\icol{ X^{(2)} \\ X^{(1)} }\right) \\
\end{bmatrix}\nonumber\\ 
&=
T\begin{bmatrix}
    f_i\left(\icol{ X^{(1)} \\ X^{(2)} }\right) \\
    gf_i\left(g\icol{ X^{(2)} \\ X^{(1)} }\right) \\
\end{bmatrix}
= Tf_i'\left(\icol{ X^{(1)} \\ X^{(2)} }\right)
\end{align}
Hence each \(f_i\) commutes with \(T\). In combination with the properties of the lift and drop layers, the full network is equivariant to flips.

\subsection{T-Equivariance}
\label{sec:org3565be3}
We formalize the method which created the equivariant network in the previous section, and show how it can create networks which are equivariant over arbitrary finite groups (with minor restrictions), and from networks with skip connections. There are also more efficient implementations for some layer types such as pooling.

For this the input tensor of each layer, denoted \(X\), must be evenly divisible into \(|G|\) slices (this is later enforced by the architecture). Additionally the group elements must commute with slices of the tensor. This is true for rotations and reflections since, for example, reflecting a tensor simply reflects each slice individually.
\begin{equation}
gX = \icol{gX^{(1)} \\ gX^{(2)} \\ \vdots}\label{mat-commute}
\end{equation}

\(T_g : X \rightarrow X\) is defined for each element of the group \(g \in G\), and consists of splitting the input tensor into \(|G|\) slices, reordering them before concatenating them, and applying \(g\). We denote this reordering as \(R_g\).
Since the reordering of slices must commute with the group elements \(G\):
\begin{equation}
T_g \defeq R_g \circ g = g \circ R_g\label{r-commute}
\end{equation}

\theoremstyle{definition}
\begin{definition}{T-Equivariance}
A function is T-Equivariant if it commutes with all of the resulting operations.
$$
\forall g \in G : T_g \circ f = f \circ T_g
$$\end{definition}
\begin{figure}[htbp]
\centering
\includegraphics[width=0.32\textwidth]{./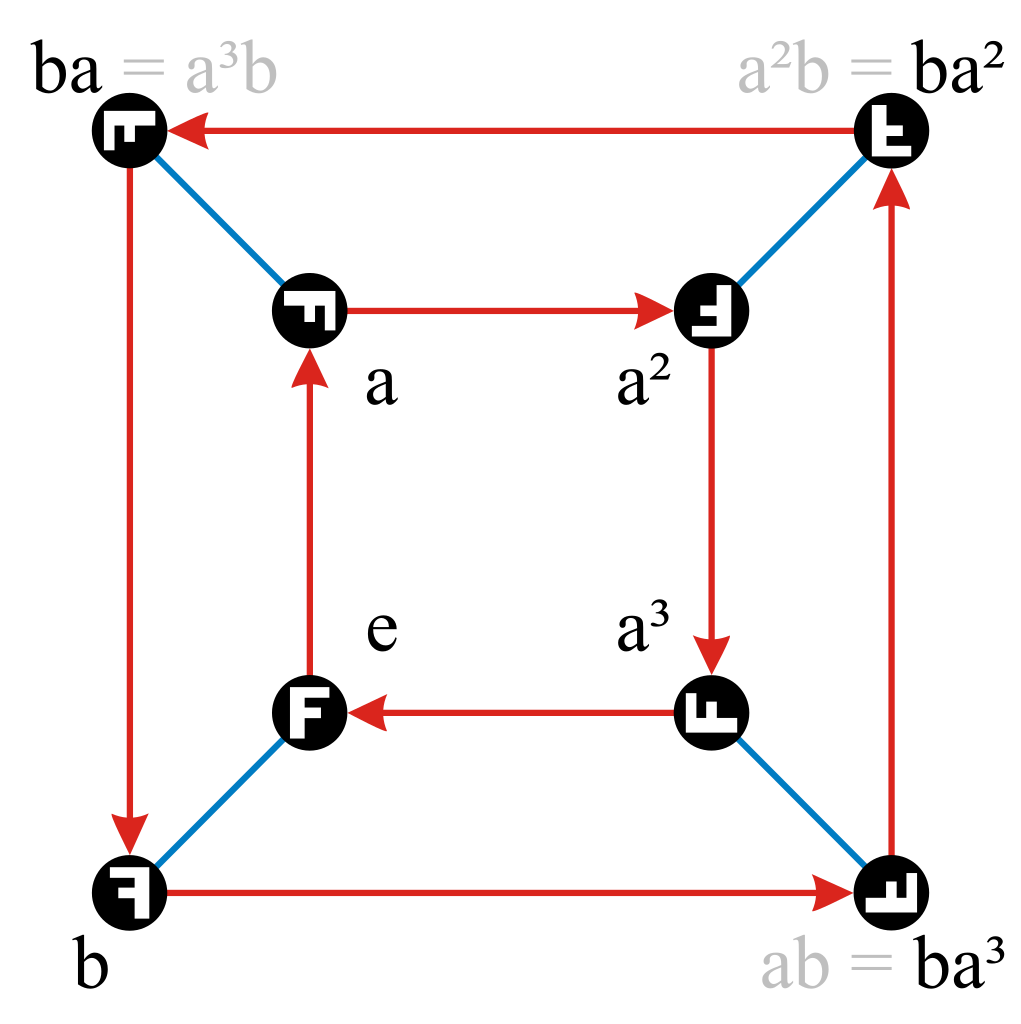}
\caption{\label{fig:org50fec45}The cayley graph of the dihedral group of the square; \(D_8\). This defines two generating functions; \(a\) (red) and \(b\) (blue), for 90\textdegree{} rotation and horiztontal reflection respectively. The nodes then represent all elements of the group/ all possible transformations of an image.}
\end{figure}

The exact reordering of pieces is defined similarly to a cayley graph representation of the group, which has a node for each element. We assign each of the \(|G|\) slices of the tensor arbitrarily to nodes in the graph then permute them according to way that \(g\) maps elements onto each-other.  For example, the generating element \(a\) is shown in Figure \ref{fig:org50fec45}, \(R_a\) permutes the slices according to the red arrows. Alternatively, this can be viewed as representing \(G\) as a subgroup of an equivalently sized permutation group (\(S_n\)). Formally:

Let \(m : G \rightarrow \mathbb{N}\) be a function which arbitrarily assigns each \(g \in G\) a unique index \(\in [1..|G|]\). Let \([g_1, g_2, \dots]\) be the resulting mapping applied to the group.
Note that \(m(g_i) = i\) by definition.

Consider the input tensor \(X\) as divided into \(|G|\) sub-tensor 'slices' denoted \([X^{(1)}, X^{(2)} \dots X^{(|G|)}]\).
\(R_s : X \rightarrow X\) is an operation which reorders the input such that in the output tensor \(X'\), \(X'^{(i)} = X^{(m(g_is))}\).

In full, this mapping can be written as:
\begin{align}
    R_s \left( \icol{
           X^{(1)}      \\
           X^{(2)}  \\
           \vdots \\
           X^{(|G|)}  \\
         }
         \right)
     =
        \icol{
           X^{(m(g_1))}      \\
           X^{(m(g_2))}      \\
           \vdots \\
           X^{(m(g_{|G|}))}      \\
         }
\end{align}

\begin{lemma}
\label{tsub-gh}
$T_h T_s = T_{hs}$ for all $h, s \in G$
\end{lemma}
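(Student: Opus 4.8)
The plan is to unfold the definition $T_g = R_g \circ g$ and reduce the whole claim to a statement purely about the reordering operators, namely $R_h \circ R_s = R_{hs}$, which is the combinatorial heart of the lemma. First I would write
$$T_h \circ T_s = (R_h \circ h) \circ (R_s \circ s) = R_h \circ h \circ R_s \circ s.$$
The action of a group element on the tensor is slice-wise (equation \ref{mat-commute}), whereas $R_s$ merely permutes slices; hence applying $h$ to each slice and then permuting agrees with permuting and then applying $h$, giving the commutation $h \circ R_s = R_s \circ h$. This is a mild generalisation of \ref{r-commute} from a single element to a pair of distinct elements. After pulling $h$ past $R_s$ I obtain
$$T_h \circ T_s = R_h \circ R_s \circ (h \circ s).$$
Because $G$ acts on the tensor by a genuine group action, $h \circ s$ equals the action of the product $hs$, so only $R_h \circ R_s = R_{hs}$ remains.

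For that core step I would chase indices through the defining relation $(R_s X)^{(i)} = X^{(m(g_i s))}$. Setting $Y = R_s X$ and computing $(R_h Y)^{(i)} = Y^{(m(g_i h))} = X^{(m(g_{m(g_i h)}\, s))}$, the crucial observation is that $g_{m(x)} = x$ for every $x \in G$, since $m$ is a bijection and $g_j$ denotes the element of index $j$. This collapses $g_{m(g_i h)}$ to $g_i h$, and associativity of $G$ then yields
$$(R_h R_s X)^{(i)} = X^{(m((g_i h) s))} = X^{(m(g_i (hs)))} = (R_{hs} X)^{(i)},$$
so $R_h \circ R_s = R_{hs}$ and therefore $T_h \circ T_s = R_{hs} \circ (hs) = T_{hs}$.

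The main obstacle is the index bookkeeping in this core step: it is easy to lose track of composition order and accidentally derive $R_{sh}$ rather than $R_{hs}$, so the cancellation $g_{m(g_i h)} = g_i h$ must be applied at precisely the right moment and associativity used to re-associate $(g_i h)s$ into $g_i(hs)$ to match the definition of $R_{hs}$. I would also take care to justify the slice-wise commutation $h \circ R_s = R_s \circ h$ for an \emph{arbitrary} pair of elements, not merely the single-element case recorded in \ref{r-commute}, since it is exactly this that licenses the clean factorisation $T_h \circ T_s = R_h \circ R_s \circ (hs)$ on which the rest of the argument rests.
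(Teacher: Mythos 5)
Your proof is correct and follows essentially the same route as the paper's: both reduce the claim to $R_hR_s = R_{hs}$ by chasing indices through the defining relation and using $g_{m(x)} = x$, then commute the group action past the reorderings to assemble $T_hT_s = R_hR_s\,hs = T_{hs}$. If anything your index bookkeeping is the more careful of the two --- the paper's intermediate expression $m(g_{m(g_is)}h)$ has $h$ and $s$ in transposed positions (read literally it would yield $R_{sh}$), whereas your derivation of $m(g_{m(g_ih)}s) = m(g_i hs)$ lands directly on the definition of $R_{hs}$.
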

\begin{proof}
From the definition, $R_h$ maps the slice at index $i$ to index $m(g_ih)$.
Applying this definition twice shows $R_hR_s$ maps the slice at index $i$ to index $m(g_{m(g_is)}h)$.

$m(g_i) = i$. Then for any $h \in G$,  $g_{m(h)} = h$. This means $m(g_{m(g_is)}h) = m(g_ihs)$.

Hence, $R_hR_s$ maps the slice at index $i$ to index $m(g_ihs)$, which is the same as $R_{hs}$. Since both reorder the slices equally. $R_hR_s = R_{hs}$

Finally, $T_h T_s = R_hhR_ss = R_h R_s hs = R_{hs}hs = T_{hs}$
\end{proof}

\subsection{T-Equivariant Layers}
\label{sec:orge8370c7}
It is possible to redefine any layer so that it commutes with \(T_s\), for any \(s \in G\).

Given some function \(f\); a layer in our network, we can define its T-equivariant version as \(f'\).
\begin{align}
    f'(X) \defeq \begin{bmatrix}
           f(T_{g_1}X)g_{1}^{-1}     \\
           f(T_{g_2}X)g_{2}^{-1}     \\
           \vdots \\
           f(T_{g_{|G|}}X)g_{|G|}^{-1}     \\
         \end{bmatrix}
\end{align}

\begin{theorem}[T-Equivariance of Simple Functions]
\label{f-equivariance}
$f'(X)$ is T-Equivariant if $g \in G$ commutes with $R_h$.
\end{theorem}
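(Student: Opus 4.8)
The plan is to verify the defining property $T_s \circ f' = f' \circ T_s$ for every $s \in G$ directly, by comparing the two sides one slice at a time. Since $f'(X)$ is a stack of $|G|$ slices whose $i$-th entry is $f(T_{g_i}X)g_i^{-1}$, it suffices to show that the $i$-th slice of $T_s f'(X)$ equals the $i$-th slice of $f'(T_s X)$ for each index $i$, and then conclude equality of the full tensors.

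First I would unwind the right-hand side. The $i$-th slice of $f'(T_s X)$ is $f(T_{g_i}(T_s X))g_i^{-1}$, and Lemma~\ref{tsub-gh} collapses the composition $T_{g_i}T_s$ into the single operator $T_{g_i s}$. Hence the $i$-th slice of $f'(T_s X)$ is simply $f(T_{g_i s}X)g_i^{-1}$, which is the clean target I want the left-hand side to reproduce.

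Next I would unwind the left-hand side using the decomposition $T_s = R_s \circ s = s \circ R_s$ from~\eqref{r-commute}. The reordering $R_s$ moves the slice at index $m(g_i s)$ into position $i$, and by the identity $g_{m(h)} = h$ established in the proof of Lemma~\ref{tsub-gh} this slice equals $f(T_{g_i s}X)(g_i s)^{-1}$. The hypothesis that $g$ commutes with $R_h$, together with the slice-wise action of group elements in~\eqref{mat-commute}, is exactly what lets me push the spatial action of $s$ through the reordering and onto each slice individually. Writing $(g_i s)^{-1} = s^{-1}g_i^{-1}$, the factor of $s$ then cancels the $s^{-1}$, leaving $f(T_{g_i s}X)g_i^{-1}$, which matches the right-hand side slice for slice. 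This establishes $T_s f'(X) = f'(T_s X)$ for all $s \in G$, and therefore $f'$ is T-equivariant.

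The main obstacle I anticipate is bookkeeping rather than any deep idea: I must keep the index arithmetic under $m$ consistent, so that $R_s$ selects exactly the slice carrying the factor $(g_i s)^{-1}$, and I must invoke the commutation hypothesis at precisely the step where $s$ is moved past $R_s$ onto the individual slices, so that the $s \cdot s^{-1}$ cancellation is legitimate. Getting the order of the group multiplication right here --- ending with $s^{-1}g_i^{-1}$ rather than, say, $g_i^{-1}s^{-1}$ or $(sg_i)^{-1}$ --- is the one place where a sign-of-convention slip would silently break the argument, so I would double-check that single cancellation most carefully.
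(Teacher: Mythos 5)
Your proposal follows essentially the same route as the paper's own proof: decompose $T_s$ as $s \circ R_s$, track the reordering via the index map $m$ to obtain slices of the form $f(T_{g_i s}X)(g_i s)^{-1}$, invoke Lemma~\ref{tsub-gh} to split $T_{g_i s}$, and cancel the $s^{-1}s$ using the hypothesis that group elements distribute over slices. The only cosmetic difference is that you unwind both sides and meet in the middle rather than transforming the left-hand side into the right, and you correctly flag the inverse-ordering convention as the one delicate point (a point on which the paper's own write-up is in fact somewhat loose).
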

\begin{proof}
In order to prove that $f'$ is equivariant to T, we need to show that $\forall s \in G: T_s(f'(X)) = f'(T_s(X))$.
\begin{align}
    T_s(f'(X)) = T_s &
        \begin{bmatrix}
            f(T_{g_1}X)g_{1}^{-1}     \\
            f(T_{g_2}X)g_{2}^{-1}     \\
            \vdots \\
            f(T_{g_{|G|}}X)g_{|G|}^{-1}     \\
        \end{bmatrix}
    = s \circ R_s
        \begin{bmatrix}
            f(T_{g_1}X)g_{1}^{-1}     \\
            f(T_{g_2}X)g_{2}^{-1}     \\
            \vdots \\
            f(T_{g_{|G|}}X)g_{|G|}^{-1}     \\
            \end{bmatrix}
        \nonumber
\end{align}
Element $i$ in the matrix is $f(T_{g_i}X)g_{i}^{-1}$, and $R_s$ maps each element in the output such that $x'_i = x_{m(g_is)}$. Then the resulting section at index $i$ is $f(T_{g_{m(g_is)}}X)g_{m(g_is)}^{-1}$. Since $g_{m(f)} = f$ this simplifies to $f(T_{g_is}X)(g_{i}s)^{-1}$. Hence:
\begin{align}
    f'(X) = g &
        \begin{bmatrix}
            f(T_{g_1s}X)(g_{1}s)^{-1}  \\
            f(T_{g_2s}X)(g_{2}s)^{-1}  \\
            \vdots \\
            \end{bmatrix}
        \nonumber
\end{align}
From our restriction that the group $G$ must commute with taking slices of the input, $s \in G$ must distribute across slices of the matrix.
Further, from Lemma \ref{tsub-gh} $T_{g_is} = T_{g_i}T_{s}$.
\begin{align}
    T_{s}f'(X) &=
        \begin{bmatrix}
            f(T_{g_1}T_{s}X)g_{1}^{-1}s^{-1}s \\
            f(T_{g_2}T_{s}X)g_{2}^{-1}s^{-1}s \\
            \vdots \\
            \end{bmatrix}
        =
        \begin{bmatrix}
            f(T_{g_1}(T_{s}X))g_{1}^{-1} \\
            f(T_{g_2}(T_{s}X))g_{2}^{-1} \\
            \vdots \\
            \end{bmatrix}
        =
        f'(T_{s}X) \nonumber
\end{align}
Since $f'$ commutes with any arbitrary $T_s, s \in G$, it is T-Equivariant.
\end{proof}

\subsection{Lift \& Drop}
\label{sec:orgebf0d93}
Now that we have the core of our network, we can consider how to enter and remove data from either end. These are layers; \(\mathit{Lift}\) to enter data and \(\mathit{Drop}\) to remove, and are defined by the following properties.
\begin{align}
\mathit{Lift}(gX) &= T_g \mathit{Lift}(X)\\
\mathit{Drop}(T_gX) &= g \mathit{Drop}(X)
\end{align}
For simple transformation groups we use the definitions: \(\mathit{Lift}(X)\) simply stacks \(|G|\) copies of \(X\), and \(\mathit{Drop}(x)\) divides \(X\) into \(|G|\) pieces and sums them.
\begin{equation}
\mathit{Lift}(X) = \icol{X \\ X \\ \vdots}
\end{equation}
\begin{equation}
\mathit{Drop}\left(\icol{X^{(1)} \\ X^{(2)} \\ \vdots}\right) = X^{(1)} + X^{(2)} + \hdots
\end{equation}

The defining properties for these can be verified from properties of \(g\) \eqref{mat-commute} and \(T_g\) \eqref{r-commute}. Importantly this means that other definitions of these layers are possible for different groups, as is important for groups acting on the action spaces of board games.

\subsection{Skip connections}
\label{sec:org9388254}
Skip-connections make the training of very deep networks more stable, and are an indispensable component of a variety of popular neural network architectures \cite{he2015residual,he2016identity,srivastava2015training,huang2017densely,DBLP:journals/corr/RonnebergerFB15}.

\(\mathit{Merge}\) layers which maintain T-Equivariance can be defined simply by splitting and zipping together the input tensors.
\begin{align}
    \mathit{Merge} \left(
    \icol{
           A^{(1)}      \\
           A^{(2)}      \\
           \vdots   \\
           A^{(|G|)}  \\
    },
    \icol{
           B^{(1)}      \\
           B^{(2)}  \\
           \vdots \\
           B^{(|G|)}  \\
    }
    \right)
= \icol{
           A^{(1)}      \\
           B^{(1)}      \\
           A^{(2)}      \\
           B^{(2)}  \\
           \vdots \\
           A^{(|G|)}  \\
           B^{(|G|)}  \\
    }
\end{align}

\begin{lemma}
\label{mrg-commute}
$\mathit{Merge}(TA, TB) = T\circ \mathit{Merge}(A,B)$
\end{lemma}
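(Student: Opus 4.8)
The plan is to prove the identity slice-by-slice, exploiting the fact that $\mathit{Merge}$ is designed so that its output, although it contains $2|G|$ sub-tensors, is meant to be read as $|G|$ paired slices, the $i$-th of which is the stack $\icol{A^{(i)} \\ B^{(i)}}$. It is on these paired slices that $T = T_s$ acts (every layer input must divide evenly into $|G|$ slices), so the first thing I would make explicit is that $T_s$ applied to $\mathit{Merge}(A,B)$ permutes and transforms these pairs as indivisible units rather than acting on the individual $A$- and $B$-components separately.

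First I would decompose $T_s = R_s \circ s$ using \eqref{r-commute}. Because every $g \in G$ commutes with taking slices \eqref{mat-commute}, applying $s$ to a paired slice distributes into both components, $s\icol{A^{(i)} \\ B^{(i)}} = \icol{sA^{(i)} \\ sB^{(i)}}$. Then $R_s$ reorders the paired slices according to its defining rule $X'^{(i)} = X^{(m(g_i s))}$. Carrying this out on the right-hand side, the $i$-th paired slice of $T_s\,\mathit{Merge}(A,B)$ becomes $\icol{s A^{(m(g_i s))} \\ s B^{(m(g_i s))}}$.

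Next I would compute the left-hand side directly. Applying $T_s$ to $A$ alone, the same decomposition $T_s = R_s \circ s$ yields a tensor whose $i$-th slice is $sA^{(m(g_i s))}$, and likewise $T_s B$ has $i$-th slice $sB^{(m(g_i s))}$. Zipping these with $\mathit{Merge}$ pairs the $i$-th slice of $T_s A$ with the $i$-th slice of $T_s B$, producing a tensor whose $i$-th paired slice is exactly $\icol{sA^{(m(g_i s))} \\ sB^{(m(g_i s))}}$, matching the right-hand side. Since the two tensors agree on every paired slice, they are equal, establishing the lemma.

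The main obstacle is conceptual rather than computational: one must be careful that $T_s$ on the merged tensor acts on the $\icol{A^{(i)} \\ B^{(i)}}$ pairs as units, so that $R_s$ moves whole pairs and $s$ respects the internal $A/B$ stacking through \eqref{mat-commute}. The content of the lemma is precisely that the zip ordering chosen in the definition of $\mathit{Merge}$ keeps the $A^{(i)}$ and $B^{(i)}$ entries aligned under this action; once the index bookkeeping mapping $i \mapsto m(g_i s)$ is set up consistently for the zipped structure, the two computations coincide term for term, with no need to invoke Lemma \ref{tsub-gh}.
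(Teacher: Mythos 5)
Your proof is correct and takes essentially the same route as the paper's: both sides are computed slice by slice via $T_s = R_s \circ s$ and matched through the index map $i \mapsto m(g_i s)$, with no appeal to Lemma \ref{tsub-gh}. You are in fact somewhat more careful than the paper in spelling out that $T_s$ applied to the merged tensor must act on the paired slices $\icol{A^{(i)} \\ B^{(i)}}$ as indivisible units, a point the paper leaves implicit in its final equality.
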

\begin{proof}
\allowdisplaybreaks
The proof of the merge layer's T-equivarience follows from the definition; the shuffling of the input results in the same shuffling of the output. For all $T_s$, $s \in G$:
\begin{align}
    \mathit{Merge}(T_sA, T_sB) = &\mathit{Merge} \left(
        T_s
        \icol{
            A^{(1)}  \\
            A^{(2)}  \\
            \vdots \\
            A^{(|G|)}  \\
        }
        ,
        T_s
        \icol{
            B^{(1)}      \\
            B^{(2)}  \\
            \vdots \\
            B^{(|G|)}  \\
        }
    \right)\nonumber\\
    =
    &\mathit{Merge} \left(
        \icol{
            A^{(m(g_1s))}s  \\
            A^{(m(g_2s))}s  \\
            \vdots \\
            A^{(m(g_{|G|}s))}s  \\
        }
        ,
        \icol{
            B^{(m(g_1s))}s  \\
            B^{(m(g_2s))}s  \\
            \vdots \\
            B^{(m(g_{|G|}s))}s  \\
        }
    \right)\nonumber
\\&=
    \icol{
            A^{(m(g_1s))}s  \\
            B^{(m(g_1s))}s  \\
            A^{(m(g_2s))}s  \\
            B^{(m(g_2s))}s  \\
            \vdots \\
            A^{(m(g_{|G|}s))}s  \\
            B^{(m(g_{|G|}s))}s  \\
        }
=
    T_s
        \icol{
            A^{1}      \\
            B^{1}      \\
            A^{2}      \\
            B^{2}  \\
            \vdots \\
            A^{|G|}  \\
            B^{|G|}  \\
        }\nonumber\\
        &= T_s\mathit{Merge}(A,B)
\end{align}
\end{proof}

\begin{lemma}
Any skip connection over T-Equivariant functions followed by a $\mathit{Merge}$ layer is itself a T-Equivariant funciton.
\end{lemma}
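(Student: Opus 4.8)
The plan is to reduce the statement to two facts that are already available: that T-equivariant functions are closed under composition, and Lemma~\ref{mrg-commute}, which says $\mathit{Merge}$ commutes with every $T_s$. First I would formalize the object under study. A skip connection is the data-flow pattern in which an input $X$ is routed through two parallel branches whose outputs are then recombined; here each branch is a finite composition of T-equivariant layers (one branch is allowed to be the identity, i.e. the literal ``skip''), and the recombination is a $\mathit{Merge}$. So the function to analyze is $M(X) \defeq \mathit{Merge}(N_1(X), N_2(X))$, where $N_1$ and $N_2$ denote the two branch maps.

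The first step is a small auxiliary closure fact: if $p$ and $q$ each commute with every $T_s$, then $T_s \circ (p \circ q) = p \circ (T_s \circ q) = p \circ q \circ T_s$, so $p \circ q$ is T-equivariant. Since the identity map is trivially T-equivariant and each layer $f_i'$ produced by our construction is T-equivariant by Theorem~\ref{f-equivariance}, every branch $N_i$ — being a composition of such layers (possibly the identity) — is itself T-equivariant, so the hypothesis of Lemma~\ref{mrg-commute} is genuinely met. With that established, the main argument is a short chain of rewrites: for arbitrary $s \in G$, I would apply Lemma~\ref{mrg-commute} to pull $T_s$ inside the merge, then use T-equivariance of each branch to push $T_s$ through onto the input, namely
\begin{align}
T_s M(X) &= T_s\,\mathit{Merge}(N_1(X), N_2(X)) = \mathit{Merge}(T_s N_1(X), T_s N_2(X)) \nonumber \\
&= \mathit{Merge}(N_1(T_s X), N_2(T_s X)) = M(T_s X). \nonumber
\end{align}
As this holds for every $s$, $M$ commutes with all $T_s$ and is therefore T-equivariant.

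I do not expect a genuine obstacle, since the real content was front-loaded into Lemma~\ref{mrg-commute} and Theorem~\ref{f-equivariance}. The only point requiring care is bookkeeping: pinning down precisely what ``skip connection'' means so that \emph{both} arguments to $\mathit{Merge}$ are confirmed T-equivariant before the lemma is invoked, and verifying the two supporting observations (T-equivariance of the identity branch and closure under composition). Everything past that is the single substitution displayed above.
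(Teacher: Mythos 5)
Your proposal is correct and follows essentially the same route as the paper: the paper's proof commutes $T$ through the two branches (each assumed T-equivariant) and then through the $\mathit{Merge}$ via Lemma~\ref{mrg-commute}, which is exactly your displayed chain of equalities written as a pair of commuting diagrams. The only difference is that you explicitly verify the closure of T-equivariance under composition and the T-equivariance of the identity branch, details the paper leaves implicit by simply assuming $f_0'$ and $f_1'$ are T-equivariant.
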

\begin{proof}

Any skip connection in a network can be visualized as the following graph. Both $f_0'$ and $f_1'$ are T-Equivariant, and may represent the composition of multiple layers and skip connections.

\begin{center}
\begin{tikzpicture}[node distance=2.2cm,auto,>=latex']
    \node [int] (in) {$X_0$};
    \node [int] (a) [right of=in]{$X_1$};
    \node [int] (b) [below of=a, node distance=1cm] {$X_2$};
    \node [int] (mrg) [right of=b, node distance=3cm] {$\mathit{Merge}(X_1, X_2) = Y$};

    \path[->] (in) edge node {$f_0'$} (a);
    \path[->] (in) edge node {$f_1'$} (b);
    \path[->] (a) edge node {$$} (mrg);
    \path[->] (b) edge node {$$} (mrg);
\end{tikzpicture}
\end{center}

$T : T \in \{T_g, g \in G\}$ can commute with this structure, using the definitions of T-Equivariance and Lemma \ref{mrg-commute}. Hence it is T-Equivariant.
\begin{center}
\begin{tikzpicture}[node distance=2.2cm,auto,>=latex']
    \node [int] (in) {$TX_0$};
    \node [int] (a) [right of=in]{$TX_1$};
    \node [int] (b) [below of=a, node distance=1cm] {$TX_2$};
    \node [int] (mrg) [right of=b, node distance=3cm] {$\mathit{Merge}(TX_1, TX_2) = TY$};

    \path[->] (in) edge node {$f_0'$} (a);
    \path[->] (in) edge node {$f_1'$} (b);
    \path[->] (a) edge node {$$} (mrg);
    \path[->] (b) edge node {$$} (mrg);
\end{tikzpicture}
\end{center}
\end{proof}

\subsection{Pooling}
\label{sec:org38a15ee}
The previous definition allows for arbitrary layers to be made T-Equivariant, which includes up or down-scaling layers. However, maximum, minimum, and average pooling layers are symmetric by definition, and applied pointwise. Over the \(D_8\) group or any sub-group this means they are T-Equivariant without modification.

\subsection{Group Equivariance}
\label{sec:orgae5d76f}
Finally, let our network be \(N\); the T-equivariant function \(f'\) composed at either end with a \(\mathit{Lift}\) and a \(\mathit{Drop}\) layer.
$$
N = \mathit{Drop} \circ f' \circ \mathit{Lift}
$$

\begin{theorem}[Group Equivariance]
\label{group-equivariance}

$N = \mathit{Drop} \circ f' \circ \mathit{Lift}$ is equivariant over group $G$.
\end{theorem}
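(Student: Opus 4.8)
The plan is to prove group equivariance directly from the definition, namely that $N(gX) = gN(X)$ for every $g \in G$ and every admissible input $X$. All three ingredients needed are already in hand: the defining property of $\mathit{Lift}$, that $\mathit{Lift}(gX) = T_g\mathit{Lift}(X)$; the T-Equivariance of the core function $f'$ established in Theorem~\ref{f-equivariance}, that $f' \circ T_g = T_g \circ f'$ for all $g \in G$; and the defining property of $\mathit{Drop}$, that $\mathit{Drop}(T_gX) = g\,\mathit{Drop}(X)$. The proof is then a telescoping chain that transports the single group action $g$ through the composition, converting it into the embedded action $T_g$ as it passes the $\mathit{Lift}$ layer and back into $g$ as it exits the $\mathit{Drop}$ layer.

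Concretely, I would expand $N(gX) = \mathit{Drop}(f'(\mathit{Lift}(gX)))$ and apply the three properties in order:
\begin{align}
N(gX) &= \mathit{Drop}\bigl(f'(\mathit{Lift}(gX))\bigr) \nonumber \\
&= \mathit{Drop}\bigl(f'(T_g\mathit{Lift}(X))\bigr) \nonumber \\
&= \mathit{Drop}\bigl(T_g f'(\mathit{Lift}(X))\bigr) \nonumber \\
&= g\,\mathit{Drop}\bigl(f'(\mathit{Lift}(X))\bigr) = gN(X). \nonumber
\end{align}
The first step uses the $\mathit{Lift}$ identity, the second uses the T-Equivariance of $f'$ (which holds for every $T_g$, $g \in G$), and the third uses the $\mathit{Drop}$ identity, yielding exactly the desired equivariance.

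The substantive work has already been discharged in the earlier results, so the only thing to watch here is that the three properties compose consistently at their seams. In particular, I must check that the $T_g$ produced by $\mathit{Lift}$ is precisely the operator that $f'$ commutes with (same index assignment $m$ and same slicing convention) and that the tensor dimensions align so that $\mathit{Drop}$ receives a genuine $|G|$-slice tensor of the form its identity requires. I would also note that $f'$ may itself be a composition of several T-Equivariant layers together with $\mathit{Merge}$-based skip connections; since T-Equivariance is closed under composition and under $\mathit{Merge}$ (by Lemma~\ref{mrg-commute}), the single symbol $f'$ legitimately stands in for any such T-Equivariant body, and no separate induction on network depth is needed. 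The only real obstacle is therefore this bookkeeping consistency of the embedded action across $\mathit{Lift}$, $f'$, and $\mathit{Drop}$, rather than any genuinely new computation.
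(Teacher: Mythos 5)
Your proof is correct and follows essentially the same route as the paper's: both transport $g$ through the composition by applying the $\mathit{Lift}$ identity, the T-Equivariance of $f'$, and the $\mathit{Drop}$ identity in sequence (the paper writes the chain as operator compositions $N \circ g = \dots = g \circ N$ rather than applied to an argument $X$, but this is only notational). Your additional remarks on the consistency of the slicing convention and on $f'$ standing for any T-Equivariant body, including $\mathit{Merge}$-based skip connections, are sound and consistent with the paper's earlier lemmas.
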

\begin{proof}

Due to the definitions of $\mathit{Lift}$ and $\mathit{Drop}$, $\forall g \in G$:
\begin{align}
N \circ g &= \mathit{Drop} \circ f' \circ \mathit{Lift} \circ g \nonumber \\
         &= \mathit{Drop} \circ f' \circ T_g \circ \mathit{Lift} \nonumber \\
         &= \mathit{Drop} \circ T_g \circ f' \circ \mathit{Lift} \\
         &= g \circ \mathit{Drop} \circ f' \circ \mathit{Lift} \nonumber \\
         &= g \circ N                    \nonumber
\end{align}
\end{proof}

Hence, \(N\) is equivariant over G, since \(g: g \in G\) commutes. All together this allows us then to 'upgrade' all layers in any network to commute with T, and by adding a layer at the start and end the whole network will be equivariant to any chosen finite group.

\subsection{Move Embeddings}
\label{sec:org781a547}
For board-games, the network must output a move, or a policy \(\pi\) over all possible moves. The reflection of this policy tensor or move embedding may not correspond to the symmetrically opposite move.

For example, the \emph{AlphaZero} \cite{DBLP:journals/corr/abs-1712-01815} chess network outputs a policy representing the probability of all possible moves. This is a tensor of size \(8 \times 8 \times 73\). Each of the \(8 \times 8\) positions identifies where to ``pick up'' a piece, while each of the corresponding 73 planes encode how that piece should be moved. The first 56 of which move a piece 1--7 squares along one of the 8 compass directions. The next 8 planes correspond to possible knight moves. Finally, the remaining 9 planes encode possible pawn under-promotions or pawn captures.

\begin{figure}[htbp]
\centering
\includegraphics[width=.9\linewidth]{./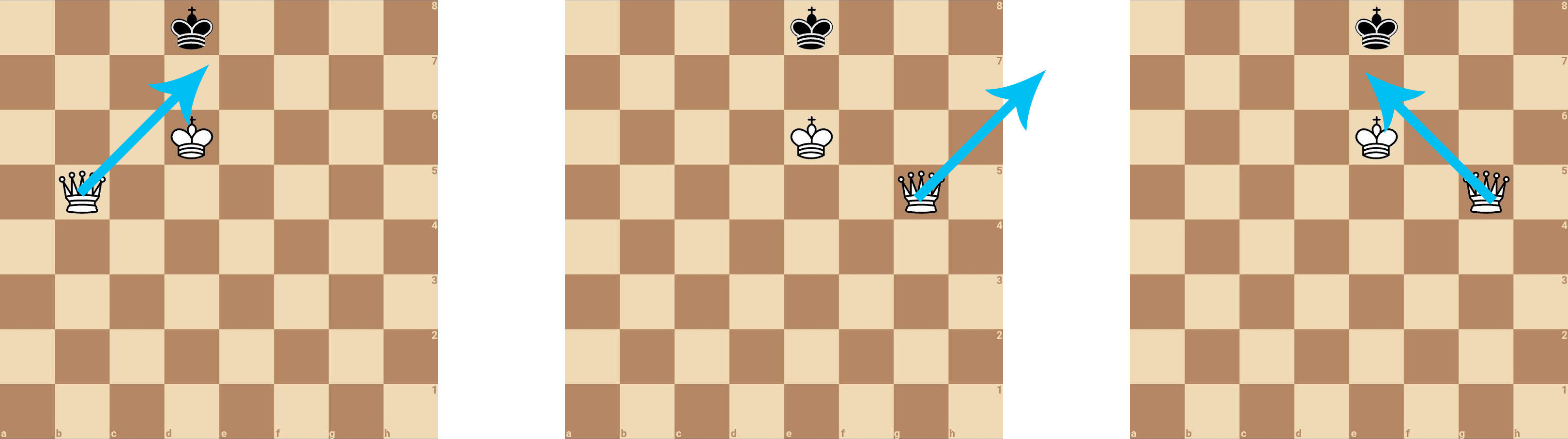}
\caption{\label{fig:orgc65d86a}A move (blue arrow) on a board state (left). The resulting move and board state after naively reflecting both tensors (middle). The correctly reflected board and move (right).}
\end{figure}

Simply reflecting this policy tensor gives the incorrect move, as shown in Figure \ref{fig:orgc65d86a}. Since we can rotate the board and recolour pieces in the input such that white is to play in all cases, we only need the network to be equivariant over horizontal reflections.

We consider the group \(G\) as acting upon this space in such a way that it maps between symmetrically opposite moves. Hence a new \(\mathit{Drop}\) layer must be defined for which its defining property holds.
$$
\mathit{Drop} \circ T_g(X) = g \circ \mathit{Drop}(X) \nonumber
$$

This can be done by splitting the move space into subtensors relating to symmetrical; \(S_0\) moves, non-symmetrical moves moving left; \(X_0\) and non-symmetrical moves moving right; \(X_1\). If \(g\) horizontally reflects the board
$$
X = \icol{S_0 \\ X_0 \\ S_0' \\ X_1}
$$
$$
\mathit{Drop}'\left(\icol{S_0 \\ X_0 \\ S_0' \\ X_1}\right) = \icol{\frac{S_0 + S_0'}{2} \\ X_0 \\ X_1}
$$

Then equivariance over horizontal reflections can be seen by:
$$
\mathit{Drop}'\left(T_g\icol{S_0 \\ X_0 \\ S_0' \\ X_1}\right) =
\mathit{Drop}'\left(\icol{gS_0' \\ gX_1 \\ gS_0 \\ gX_0}\right) =
g\icol{\frac{S_0 + S_0'}{2} \\ X_1 \\ X_0}
$$

Since this resulting network has \(X_0\) and \(X_1\) switched --- each symmetrically relating to opposite moves, the move embedding is correctly flipped. This \(\mathit{Drop}'\) layer then, when used in conjunction with any FGNN, will correctly reflect the move embeddings when the input tensor is reflected.
\section{Methodology}
\label{sec:orgd0651cc}
%start with how you implemented the method
We implement FGNNs in Tensorflow 2.0, and our U-Net implementation is based on an open-sourced example\footnote{https://github.com/zhixuhao/unet}.

To avoid the complexity of specifying each $T_g$ and $R_g$ for every element of the group, our implementation of FGNNs uses the concept of a generating set. This is a subset of the group where any element of the group can be created only from the elements in the generating set.
For example, the \(D_8\) group, consisting of all symmetries of a square, can be generated using 2 elements; a horizontal reflection and a rotation of 90\textdegree{}. 

Groups can be defined by specifying the generating elements as tensorflow functions, and how they each one permutes the tensor slices; a list representing \(R_g\). 
This group object can be passed to \(\mathit{Lift}\), \(\mathit{Drop}\), \(\mathit{Merge}\) and any other layer. The resulting network will be equivariant over that group, provided the commutativity rules hold. This source code is available at \url{https://github.com/FGNN-Author/FGNN}.

\subsection{Datasets}
In order to enable quick iteration of ideas and smaller neural networks to be tested, we choose an equivalently smaller/ simpler game. Checkers (or draughts) is a 2 player game played on an \(8 \times 8\) board. It is a solved game \cite{schaeffer2007checkers}, and has a relatively small number of possible states (\(\sim 5\times 10^{21}\) \cite{schaeffer2007checkers}) compared to chess (\(\sim10^{43}\) \cite{shannon1950xxii}), or go (\(\sim 2 \times 10^{170}\) \cite{tromp2016number}). Still it has many of the properties which makes creating equivariant networks for board-games challenging; pieces which 'move' resulting in more complicated move or policy embeddings, and multiple piece types (after promotion).

We use a dataset of 22 thousand tournament games compiled by the Open Checker Archive\footnote{Open Checker Archive: http://www.fierz.ch/download.php} for training and evaluating the networks. When a move consists of multiple captures, called jumps, these are added to the dataset as multiple board positions and moves, each consisting of a single jump. The resulting training and test sets together contain almost 1 million board states along with the next move made from that state. %Training a network on this, then, is effectively teaching it to predict the moves made by high level checkers players.

\begin{figure}[htbp]
\centering
\includegraphics[width=.9\linewidth]{./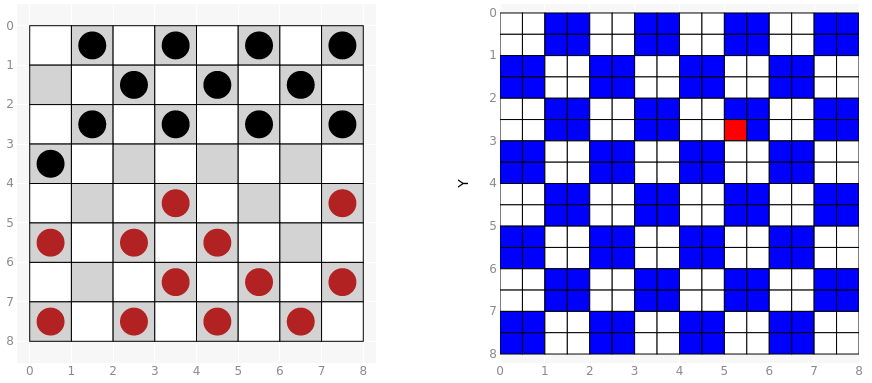}
\caption{\label{fig:orgbd7c1af}The current board state (left) is given as input to the network and the next move to be played, in red (right), is predicted by the network.}
\end{figure}

The board state is given to the network as a single $8 \times 8$ matrix, with values for each square. If the square is empty this value is 0, otherwise for regular pieces it is -1 or 1 for black or red respectively. Squares containing kings similarly are -3 or 3 for black or red. The moves made are one-hot encoded into an vector of size 128, with a space for each square on the board (32) times each of the directions (4) that a piece may move from that square. These are shown in blue/red in Figure \ref{fig:orgbd7c1af}.

Additionally, we test models' performance on the EM segmentation challenge\footnote{http://brainiac2.mit.edu/isbi\_challenge/}. The benchmark was started at ISBI 2012 and is still open for new contributions. The training data is a set of 30 images (512x512 pixels), which are from serial section transmission electron microscopy of the Drosophila first instar larva ventral nerve cord (VNC).

\subsection{FGNNs For Checkers}

%As discussed previously, an issue with using existing methods for equivariant networks for games is their inability to 'mirror' more complex representations such as moves. One strength of our method is that it is easier to reason about. Transformations of the input (\(g\), where \(g \in G\)) transform the output prior to the \(\mathit{Drop}\) layer by \(T_g\). Defining \(\mathit{Lift}\) and \(\mathit{Drop}\) as defined previously allows for networks to be equivariant to \(g\) when both input and output are acting in the same space.

There are effectively four ways the that the same board state can occur in checkers. Two of these are for each player, times two reflections along the horizontal axis of the board. We rotate the board so that the current player is always playing upwards, and 'recolour' them so the current player is black. Because of this, the architecture only needs to only be equivariant over horizontal reflections of the game board.

For checkers, we choose to output a tensor which is \(8 \times 8 \times 4\). Similar to \emph{AlphaZero} each of the \(8 \times 8\) vectors correspond to a starting square for a piece, and the 4 layers represent the 4 directions to move a piece from that square. Both single-square moves and captures; moving 2 squares, are represented by the same layer in the output.

By choosing the order of these 4 layers to be the compass directions: NE SE NW SW, there is a simple method of creating an equivariant architecture. Reflecting a move involves swapping any NE move with NW, and SE with SW, which is the same as swapping the corresponding layers in the output. Finally, the full tensor can simply be horizontally reflected. For example, a move to the NE at (0,0) becomes a move to the NW at (7,0). This way to 'reflect' a move is the exact definition for \(T_g\) over the group of horizontal reflections, and so architecture can be used by simply defining the \(\mathit{Drop}\) layer to be the identity function.

%\subsubsection{Model sizes \& Training}
%\label{sec:orgb999e46}
The state-of-the-art neural architectures for games such as chess and go are highly optimized methods with lots of specific implementation details. Our goal however was to generally evaluate if equivariant networks could be useful in this space. With this in mind, we choose to use a simple CNN architecture with only convolutional layers for both our FGNN implementation and baseline. CNNs can be easily scaled by small amounts adding or removing filters, which allows us to evaluate how equivariance may effectively scale with larger or smaller models; it may only improve the expressive capacity of small networks, or reduce over-fitting in larger ones.

We use CNNs with 10 layers, where each layer has the same number of filters, has a $3 \times 3$ convolution, with \emph{ReLU} activation \cite{glorot2011deep}, and is zero-padded. The number of filters is varied to give a different number of trainable weights in a variety of models. In the final layer, models mask out the 32 squares which correspond to the reachable squares in a checkers game, and flatten the values into a single vector of size 128. A final softmax layer ensures this vector is normalized.

Our FGNN-CNN models are identical to the CNN baseline other than having 2 filters rather than 4 in the last convolutional layer. The size of the output from an FGNN layer with a certain number of trainable weights isn't the same as it's equivalent non-equivariant version, meaning the sizes of the FGNN networks don't match exactly with the baselines. All models are trained for 50 epochs with a categorical cross-entropy loss.

\subsection{FGNNs For Biomedical Segmentation}
To see how this approach may extend to other domains an network architectures, we test FGNN variants of U-Net \cite{DBLP:journals/corr/RonnebergerFB15} (FGNN-U-Net) on the ISBI 2012 challenge dataset.

We create variants of FGNN-U-Net which are equivariant over different groups. These include horizontal reflections, vertical and horizontal reflections, and rotations of 90\textdegree{} and reflections (the \(D_8\) group). By scaling up/down the number of filters in all the layers in the network, we also create variants with differing numbers of trainable weights. The number of filters in each layer of U-Net can only easily be scaled by factors of two, doubling or halving the number of weights in the network. Due to this and the FGNN architecture it isn't always possible to create equivalently sized networks with equivariances to different groups.

\section{Results}
\label{sec:org9c067e1}

\subsection{FGNN-CNN (Checkers)}
\label{sec:orgb03d2ae}
\begin{figure}[htbp]
\centering
\includegraphics[width=0.5 \textwidth]{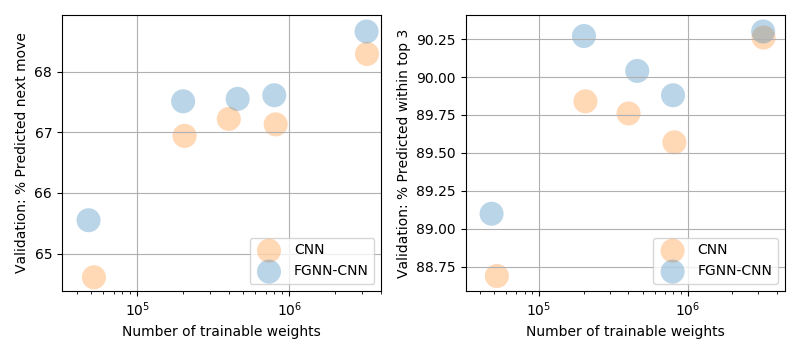}
\caption{\label{fig:org28a5708}The accuracy (left) and top-3 accuracy (right) of equivariant and baseline networks of different sizes on unseen board states.}
\end{figure}

Figure \ref{fig:org28a5708} compares the various models' performance at predicting the next move to be made on unseen board positions. 
FGNN-CNN models predict the next move and have the next move among the top-3 moves predicted by the network more often than equivalent CNN architectures with similar numbers of trainable weights. This performance increase is present across a wide variety of different model sizes, from models which underfit to models which significantly overfit the data.

\begin{figure}[htbp]
\centering
\includegraphics[width=0.5 \textwidth]{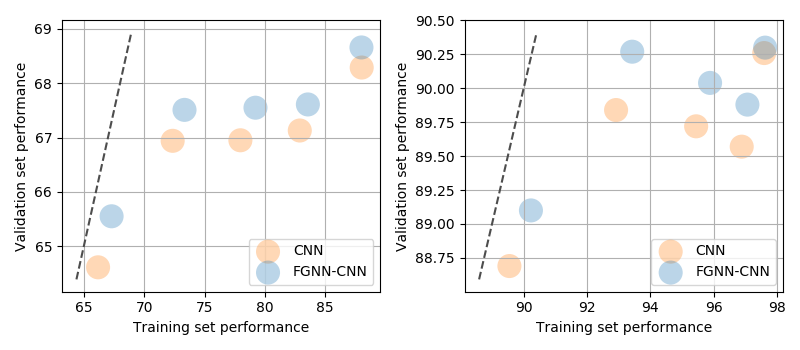}
\caption{\label{fig:orgffaac19}The models' performance on the training set vs the test set. The dashed line is where both are equal, and hence no over-fitting occurred.}
\end{figure}

Evaluating the ability of FGNNs method to combat over-fitting is more challenging. Figure \ref{fig:orgffaac19} compares each model's accuracy on the training set (seen examples) against the unseen examples of the validation set. Models which over-fit more will have a larger difference between training and validation set performance, which can be seen as a larger distance to the dashed line.
Here, we demonstrate that FGNN-CNN models are able to marginally reduce over-fitting. For every baseline model there is an equivariant model which outperforms it or is nearer to the dashed line denoting an ideal learner.

\subsection{FGNN-U-Net (Image Segmentation)}

3 FGNN-U-Net variants which are equivariant to different groups are compared to the unmodified U-Net in Figure \ref{fig:orgca40583}. Our equivariant versions of U-Net outperform the baseline, and this performance improvement seems to scale with the larger groups. This shows that FGNNs can effectively be used in networks with skip-connections, a property that isn't possible in existing methods for creating equivariant networks. The fact that equivariance over larger groups further improves performance demonstrates that equivariant networks are likely to show increased performance gains when used on problems which have more symmetries.

\begin{figure}[htbp]
\centering
\includegraphics[width=0.40\textwidth]{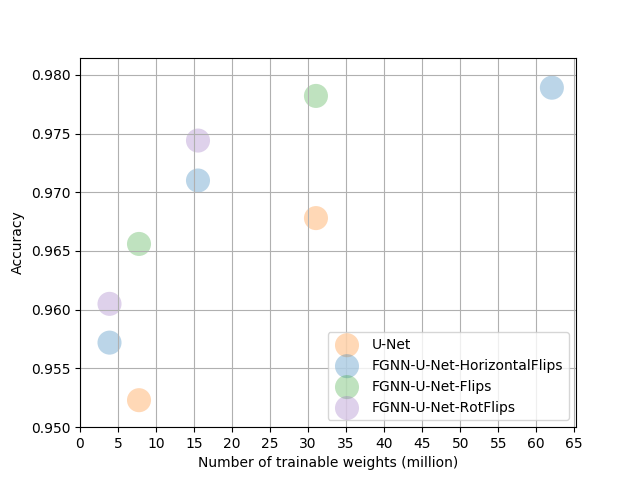}
\caption{\label{fig:orgca40583}The performance of FGNN-U-Net variants of different sizes and symmetry groups compared to the unmodified U-Net network.}
\end{figure}

\section{Conclusion}
\label{sec:org9b754bc}
We have introduced \emph{Finite Group Neural Networks} (FGNNs), neural network architectures which create symmetry equivariant learning algorithms for games.
We demonstrate horizontally equivariant FGNN networks reduce over-fitting and outperform baselines at playing checkers, regardless of the networks' size. Additionally, FGNNs have a strong theoretical foundation and are arguably easier to reason about and extend than existing equivariant architectures. They are the first equivariant architecture which supports skip connections and arbitrary layer types. This is demonstrated by FGNN-U-Net variations, which outperform the unmodified U-Net network in the task of biomedical image segmentation.

The clear continuation of this work is FGNNs' applications to other games. While checkers is a reasonable choice to test a variety of approaches, without any external baselines the performance of models is hard to assess. In the future we would like to create equivariant versions of existing network architectures used for Chess and Go. Go in particular is a game equivariant over the full $D_8$ group so equivariant networks are likely to provide a significant improvement over current approaches.

One of the limitations of the work is that it inherently involves more operations per layer than non-equivariant networks. This slows down training and inference, especially for larger groups, and also scales linearly with the number of elements in the group (which is the same as existing work). However, there may be ways to improve this by using other representations. Often a group can be faithfully represented as, for example, the permutation of a much smaller number of elements. This may provide significant improvements for the speed of FGNNs.

Overall, equivariance is shown to improve the performance of neural networks for playing checkers, and may be prove to be a promising avenue of research in a variety of games. The methods presented may be adapted to create equivariances in existing neural architectures for Chess, Go, and Shogi, as well as a variety of other learning tasks.

\bibliographystyle{unsrt}
\bibliography{refs}
\end{document}